\newtheorem{theorem}{Theorem}
\newlist{abbrv}{itemize}{1}
\setlist[abbrv,1]{label=,labelwidth=0.5in,align=parleft,itemsep=0.1\baselineskip,leftmargin=!}
\DeclareRobustCommand{\cev}[1]{%
  {\mathpalette\do@cev{#1}}%
}
\newcommand{\do@cev}[2]{%
  \vbox{\offinterlineskip
    \sbox\z@{$\m@th#1 x$}%
    \ialign{##\cr
      \hidewidth\reflectbox{$\m@th#1\vec{}\mkern4mu$}\hidewidth\cr
      \noalign{\kern-\ht\z@}
      $\m@th#1#2$\cr
    }%
  }%
}
\DeclareMathOperator{\relu}{ReLU}
\DeclareMathOperator{\diam}{diam}
\newcommand{\im}{\mathbf{j}}
\newcommand{\pg}{\mathbf{p}^{\text{g}}}
\newcommand{\qg}{\mathbf{q}^{\text{g}}}
\newcommand{\Sg}{\mathbf{S}^{\text{g}}}
\newcommand{\vm}{\mathbf{v}}
\newcommand{\va}{\mathbf{\theta}}
\newcommand{\V}{\mathbf{V}}
\newcommand{\pf}{{\mathbf{p}}^{\text{f}}}
\newcommand{\qf}{{\mathbf{q}}^{\text{f}}}
\newcommand{\Sf}{{\mathbf{S}}^{\text{f}}}
\newcommand{\wmsoc}{\mathbf{w}}
\newcommand{\wrsoc}{\mathbf{w}^{\text{r}}}
\newcommand{\wisoc}{\mathbf{w}^{\text{i}}}
\newcommand{\pd}{\text{p}^{\text{d}}}
\newcommand{\qd}{\text{q}^{\text{d}}}
\newcommand{\Sd}{\text{S}^{\text{d}}}
\newcommand{\pgmin}{\underline{\text{p}}^{\text{g}}}
\newcommand{\pgmax}{\overline{\text{p}}^{\text{g}}}
\newcommand{\qgmin}{\underline{\text{q}}^{\text{g}}}
\newcommand{\qgmax}{\overline{\text{q}}^{\text{g}}}
\newcommand{\vmmin}{\underline{\text{v}}}
\newcommand{\vmmax}{\overline{\text{v}}}
\begin{document}

\title{Learning Optimal Power Flow Value Functions\\ with Input-Convex Neural Networks}

\author{\IEEEauthorblockN{Andrew Rosemberg\IEEEauthorrefmark{1},
Mathieu Tanneau\IEEEauthorrefmark{1},
Bruno Fanzeres\IEEEauthorrefmark{2},
Joaquim Garcia\IEEEauthorrefmark{3} and
Pascal Van Hentenryck\IEEEauthorrefmark{1}}
\IEEEauthorblockA{\IEEEauthorrefmark{1}\textit{Georgia Institute of Technology}, 
arosemberg3@gatech.edu, \{mathieu.tanneau, pascal.vanhentenryck\}@isye.gatech.edu}
\IEEEauthorblockA{\IEEEauthorrefmark{2} \textit{Industrial Engineering Department}, \textit{Pontifical Catholic University of Rio de Janeiro}, bruno.santos@puc-rio.br}
\IEEEauthorblockA{\IEEEauthorrefmark{3} Research and Development, PSR - Energy Consulting and Analytics, joaquimgarcia@psr-inc.com}
}

\maketitle

\begin{abstract}
The Optimal Power Flow (OPF) problem is integral to the functioning of power systems, aiming to optimize generation dispatch while adhering to technical and operational constraints. These constraints are far from straightforward; they involve intricate, non-convex considerations related to Alternating Current (AC) power flow, which are essential for the safety and practicality of electrical grids. However, solving the OPF problem for varying conditions within stringent time-frames poses practical challenges. 
To address this, operators often resort to model simplifications of varying accuracy. Unfortunately, better approximations (tight convex relaxations) are often still computationally intractable. This research explores machine learning (ML) to learn convex approximate solutions for faster analysis in the online setting while still allowing for coupling into other convex dependent decision problems. By trading off a small amount of accuracy for substantial gains in speed, they enable the efficient exploration of vast solution spaces in these complex problems.
\end{abstract}

\begin{IEEEkeywords}
Optimal Power Flow, Renewable Energy, Deep Learning, Convex Relaxations, Learn-to-Optimize.
\end{IEEEkeywords}

\thanksto{\noindent This research is partly funded by NSF award 2112533.}

\section*{Nomenclature}
\label{sec:nomenclature}

% \paragraph{Sets}
\begin{abbrv}
    \item[$\mathcal{N}$] Set of buses
    \item[$\mathcal{E}$] Set of branches
    \item[$\mathcal{E}^{R}$] Set of reverse branches
\end{abbrv}

% \paragraph{Parameters}

\begin{abbrv}
    \item[$\im$] Imaginary unit $\im^{2} = -1$
    \item[$z^{\star}$] Complex conjugate of $z$
    \item[$\Sd$] Complex power demand; $\Sd = \pd + \im \qd$
    \item[$Y^{s}$] Bus shunt admittance
    \item[$Y$] Complex branch line admittance
    \item[$Y^{c}$] Complex branch shunt admittance
    \item[$\bar{s}$] Thermal branch limit
    \item[$\pgmin, \pgmax$] Active power generation bounds
    \item[$\qgmin, \qgmax$] Reactive power generation bounds 
    \item[$\vmmin, \vmmax$] Voltage magnitude bounds
    \item[$b$] Branch susceptance
\end{abbrv}

% \paragraph{Variables}

\begin{abbrv}
    \item[$\Sg$] Complex power generation; $\Sg {=} \pg + \im \qg$
    \item[$\Sf$] Complex power flow; $\Sf = \pf + \im \qf$
    \item[$\V$] Complex voltage; $\V = \vm \angle \theta$
    % \item[$c$] Active power generation cost vector;  $C = c^{T}\pg$
\end{abbrv}

\section{Introduction}

The Optimal Power Flow (OPF) problem optimizes generation dispatch
while satisfying physical and engineering constraints.  It is
therefore fundamental for many aspects of power systems operations:
market-clearing, unit commitment, optimal transmission switching,
transmission expansion planning, to name a few. Its alternating
current (AC) form \cite{carpentier1979optimal, granville1996application}, AC-OPF, is a nonlinear, non-convex problem which
makes it challenging to solve in practice, especially when combined
with discrete decisions like unit commitment, line switching, and bus
splitting.  Therefore, operators rely on approximate OPF formulations,
typically the DC-OPF approximation which, although more tractable, may
lead to sub-optimal or unsafe decisions when far from the traditional
operating point, because it does not capture the complexity of AC
systems.

These computational limitations have spurred interest in optimization
proxies for power systems, and OPF problems in particular.
Optimization proxies \cite{kotary2021end, van2021machine, velloso2021combining} are machine learning (ML) models that approximate
the input-to-output mapping of an optimization solver; once trained,
they produce predictions in milliseconds.  A large body of work has
focused on predicting solutions to OPF problems, especially DC-OPF and
AC-OPF. In this case, the proxy takes the input data of the OPF as
input and outputs a near-feasible, near-optimal solution.  This
enables real-time risk assessment at massive scales.

Another stream of research uses optimization proxies to capture
complex interactions, e.g., AC power flow equations, then embeds the
trained proxy in a larger optimization problem, e.g., a
unit-commitment problem
\cite{Kody2022_DNNUnitCommitment,Wu2023_TransientUCusingICNN}.  This
strategy replaces the nonlinear component. e.g., the AC power flow
equations, with a mixed-integer representation of a trained neural
network. Although it removes the nonconvexity stemming from the
physics, this approach requires the use of discrete variables,
introducing another type of non-convexity, which reduces its
tractability for large-scale systems.

To address this challenge, this paper explores the use of {\em
  input-convex neural networks (ICNN)} \cite{amos2017input} as an alternative to non-convex
DNN for applications where the neural network must be embedded in a
larger optimization.  Specifically, the paper focuses on {\em learning
  a tractable approximation of the value function of an OPF problem
  using ICNNs}. Such ICNNs, if accurate enough, would provide a highly
valuable tool for a broad range of applications in power systems: unit
commitment, transmission switching but also a wealth of stochastic
optimization and reinforcement learning methods that implicitly rely
on value functions and their gradients.  

{\em The main objective of the paper is thus to determine whether
  ICNNs, despite their more limited expressive power, can match the
  performance of DNNs for approximating the value of OPF problems}.
This is a pre-requisite to using ICNNs in larger optimizations and an
open issue in the representation power of neural networks. The main
contributions of the paper can be summarized as follows.
\begin{enumerate}
\item The paper contributes strong theoretical guarantees on the
  performance of ICNNs. In particular, it provides bounds on the
  generalization error of ICNNs that only depend on the ICNN
  performance on the training data. 

\item The paper explores specific ICNN architectures and trains it
  to learn three OPF formulations: the AC-OPF, the SOC relaxation, and
  the DC-OPF.

\item The paper reports the performance of the resulting ICNNs on
  large-scale systems, that are 50 times larger than prior
  research. The results demonstrate that ICCNs are capable of learning
  the value function of OPF problems, at least as effectively as DNNs
  on traditional test cases, with optimality gaps almost always lower
  than 0.5\%.
\end{enumerate}

\noindent
The rest of the paper is organized as follows. Section \ref{section:
  Literature} reviews related works in the literature, Section
\ref{sec:OPF} describes the OPF formulations considered in the paper.
Section \ref{sec:ICNN} presents the input-convex architecture used in
the paper, and provides strong generalization bounds for this class of
models.  Section \ref{sec:results} presents and analyses the results
of the proposed comparison, and Section \ref{section: conclusion}
concludes the paper.

\section{Related Work}
\label{section: Literature}

The decomposition of intricate problems through value function approximations has found widespread application both in industry and academic literature. This approach has been instrumental in achieving tractable solutions for various practical scenarios, including multistage decision-making problems.

In multistage problems, such as those encountered in storage management and long-term asset investment, decision-makers often seek optimal policies. They do so by employing a spectrum of function approximations that range from simple parametric forms \cite{powell2022parametric} to more intricate piece-wise models \cite{pereira1991multi, shapiro2009lectures}. Some of these advanced models may involve a substantial number of individual function evaluations to reach convergence \cite{ruszczynski2011nonlinear, nesterov2003introductory, asamov2018regularized}.

DNNs have emerged as a standout player in approximating solutions, particularly in the domain of Optimal Power Flow (OPF) problems \cite{chen2022learning, fioretto2020predicting, mak2021load}. Furthermore, He et al. \cite{he2022enabling} have successfully employed neural networks to approximate the cost function of unit commitment problems, streamlining constraint screening processes and improving solution efficiency.

ICNNs have also found applications in energy-related challenges, such as unit commitment \cite{Wu2023_TransientUCusingICNN} and voltage regulation \cite{chen2020data}. These networks offer a unique advantage by ensuring convexity within specific regions of their input domains through parameter constraints, thereby reducing the complexity of identifying convex mappings.

A notable contribution by Zhang et al. \cite{Zhang2022_ConvexNNSolverDCOPF} involves training convex neural networks to predict the objective values of DC-OPF. This innovative approach enables the derivation of dual solutions, aiding in the identification of active sets of constraints. Leveraging the convexity of these ICNNs, this method augments the training process and provides valuable generalization bounds. Similar investigations by Chen et al. \cite{Chen2022_LearningDCOPFDuality} extend this approach to systems with up to 118 buses, highlighting its applicability.

In another pioneering effort, Wu et al. \cite{Wu2023_TransientUCusingICNN} harness the power of ICNNs to map pre-fault operation conditions to transient stability indices. This enables the formulation of transient stability constraints, with numerical experiments conducted on systems featuring 39 and 118 buses validating the effectiveness of their methodology.

Machine Learning (ML) techniques, including ICNN-based approaches, have shown promise in discovering convex approximations and relaxations for optimization problems. In the context of Optimal Power Flow (OPF) applications \cite{Cengil2022_AccelerateGlobalOptimalSolutions}, methodologies utilizing ICNNs have exhibited significant potential \cite{duchesne2021supervised}. These collective advances underscore the growing role of advanced neural network techniques in enhancing optimization methodologies, promising more efficient and effective problem-solving strategies.

{\em This paper extends these lines of research in several
directions. First, it demonstrates, for the first time, that ICNNs can
provide state-of-the-art results in predicting value functions for
large-scale OPF problems involving thousands of buses. The OPF
problems studied in the paper also go beyond the DC model and include
the SCOP relaxation and the AC-OPF. Second, the paper contributes
strong generalization bounds for ICNNs that significantly expand
existing work.}

\section{Optimal Power Flow}
\label{sec:OPF}

The Optimal Power Flow (OPF) problem \cite{carpentier1962contribution} is a fundamental problem in power systems operations.
The OPF problem finds the most economical generation dispatch so as to serve electricity demand while satisfying physical and engineering constraints.
The paper considers the AC-OPF formulation, its second-order cone (SOC) relaxation, and its DC-OPF linear approximation, which are presented next.
For ease of reading, the presentation omits transformer tap ratios, phase angle difference constraints, and reference voltage (slack bus) constraints.
All are implemented and considered in the experiments of Section \ref{sec:results}

\subsection{The AC-OPF Formulation}
\label{sec:OPF:AC}

    \begin{model}[!t]
        \caption{The AC-OPF Model}
        \label{model:ACOPF}
        \begin{subequations}
        \label{eq:ACOPF}
        % \mt{WIP. I'm ignoring transformers' tap, slack bus}
        \footnotesize
        \begin{align}
            \min \quad 
            & \sum_{i \in \mathcal{N}} c_{i} \pg_{i} \label{eq:ACOPF:objective}\\
            \textrm{s.t.} \quad
                & \Sg_{i} - \Sd_{i} - (Y^{s}_{i})^{\star} |\V_{i}|^{2} = \sum_{ij \in \mathcal{E} \cup \mathcal{E}^{R}} \Sf_{ij}
                && \forall i \in \mathcal{N} 
                \label{eq:ACOPF:kirchhoff} \\
            & \Sf_{ij} = (Y_{ij} + Y_{ij}^{c})^{\star} |\V_{i}|^{2} - Y_{ij}^{\star} \V_{i} \V_{j}^{\star}
                && \forall ij \in \mathcal{E} \label{eq:ACOPF:ohm_fr}\\
            & \Sf_{ji} = (Y_{ij} + Y_{ji}^{c})^{\star} |\V_{j}|^{2} - Y_{ij}^{\star} \V_{i}^{\star} \V_{j}
                && \forall ij \in \mathcal{E} \label{eq:ACOPF:ohm_to}\\
            & |\Sf_{ij}|, |\Sf_{ji}| \leq \bar{s}_{ij}
                && \forall ij \in \mathcal{E} \label{eq:ACOPF:thermal_limits} \\
            & \vmmin_{i} \leq |\V_{i}| \leq \vmmax_{i} 
                && \forall i \in \mathcal{N} \label{eq:ACOPF:voltage_bounds}\\
            % & \dvamin \leq \va_{i} - \va_{j} \leq \dvamax 
                % && \forall ij \in \mathcal{E} \label{eq:ACOPF:angle_diff_bounds}\\
            & \pgmin_{i} \leq \pg_{i} \leq \pgmax_{i}
                && \forall i \in \mathcal{N} \label{eq:ACOPF:active_dispatch_bounds}\\
            & \qgmin_{i} \leq \qg_{i} \leq \qgmax_{i} 
                && \forall i \in \mathcal{N} \label{eq:ACOPF:reactive_dispatch_bounds}
            % & \va_{ref} = 0
        \end{align}
        \end{subequations}
    \end{model}

    Model \ref{model:ACOPF} presents the AC-OPF formulation, in complex variables.
    The objective \eqref{eq:ACOPF:objective} minimizes total generation costs.
    Constraints \eqref{eq:ACOPF:kirchhoff} enforce power balance (Kirchhoff's current law) at each bus.
    Constraints \eqref{eq:ACOPF:ohm_fr} and \eqref{eq:ACOPF:ohm_to} express Ohm's law on forward and reverse power flows, respectively.
    Constraints \eqref{eq:ACOPF:thermal_limits} enforce thermal limits on forward and reverse power flows.
    Finally, constraints \eqref{eq:ACOPF:voltage_bounds}--\eqref{eq:ACOPF:reactive_dispatch_bounds} enforce minimum and maximum limits on nodal voltage magnitude, active generation, and reactive generation, respectively.
    The AC-OPF problem is nonlinear and non-convex and is typically solved using interior-point algorithms.

\subsection{The SOC-OPF Formulation}
\label{sec:OPF:SOC}

    \begin{model}[!t]
        \caption{The SOC-OPF model}
        \label{model:SOCOPF}
        \begin{subequations}
        \label{eq:SOCPF}
        \footnotesize
        \begin{align}
            \min \quad 
            & \sum_{i \in \mathcal{N}} c_{i} \pg_{i} \label{eq:SOCOPF:objective}\\
            \textrm{s.t.} \quad
            & \pg_{i} - \pd_{i} - g^{s}_{i} \wmsoc_{i} = \sum_{ij \in \mathcal{E} \cup \mathcal{E}^{R}} \pf_{ij}
                && \forall i \in \mathcal{N} 
                \label{eq:SOCOPF:kirchhoff_active} \\
            & \qg_{i} - \qd_{i} + b^{s}_{i} \wmsoc_{i} = \sum_{ij \in \mathcal{E} \cup \mathcal{E}^{R}} \qf_{ij}
                && \forall i \in \mathcal{N} 
                \label{eq:SOCOPF:kirchhoff_reactive} \\
            & \pf_{ij} = \gamma^{p}_{ij} \wmsoc_{i} + \gamma^{p,r}_{ij} \wrsoc_{ij} + \gamma^{p,i}_{ij} \wisoc_{ij}
                && \forall ij \in \mathcal{E} 
                \label{eq:SOCOPF:ohm:active:fr}\\
            & \qf_{ij} = \gamma^{q}_{ij} \wmsoc_{j} + \gamma^{q,r}_{ij} \wrsoc_{ij} + \gamma^{q,i}_{ij} \wisoc_{ij}
                && \forall ij \in \mathcal{E} 
                \label{eq:SOCOPF:ohm:reactive:fr}\\
            & \pf_{ji} = \gamma^{p}_{ji} \wmsoc_{i} + \gamma^{p,r}_{ji} \wrsoc_{ij} + \gamma^{p,i}_{ji} \wisoc_{ij}
                && \forall ij \in \mathcal{E}
                \label{eq:SOCOPF:ohm:active:to}\\
            & \qf_{ji} = \gamma^{q}_{ji} \wmsoc_{j} + \gamma^{q,r}_{ji} \wrsoc_{ij} + \gamma^{q,i}_{ji} \wisoc_{ij}
                && \forall ij \in \mathcal{E}
                \label{eq:SOCOPF:ohm:reactive:to}\\
            & (\pf_{ij})^{2} + (\qf_{ij})^{2} \leq \bar{s}_{ij}^{2}
                && \forall ij \in \mathcal{E} \cup \mathcal{E}^{R} 
                \label{eq:SOCOPF:thermal_limits} \\
            & (\wrsoc_{ij})^{2} + (\wisoc_{ij})^{2} \leq \wmsoc_{i} \wmsoc_{j}
                && \forall ij \in\mathcal{E}
                \label{eq:SOCOPF:jabr}\\
            & \vmmin_{i}^{2} \leq \wmsoc_{i} \leq \vmmax_{i}^{2} 
                && \forall i \in \mathcal{N} \label{eq:SOCOPF:voltage_bounds}\\
            % & \dvamin \leq \va_{i} - \va_{j} \leq \dvamax 
                % && \forall ij \in \mathcal{E} \label{eq:ACOPF:angle_diff_bounds}\\
            & \pgmin_{i} \leq \pg_{i} \leq \pgmax_{i}
                && \forall i \in \mathcal{N} \label{eq:SOCOPF:active_dispatch_bounds}\\
            & \qgmin_{i} \leq \qg_{i} \leq \qgmax_{i} 
                && \forall i \in \mathcal{N} \label{eq:SOCOPF:reactive_dispatch_bounds}
            % & \va_{ref} = 0
        \end{align}
        \end{subequations}
    \end{model}

    The Second-Oder Cone (SOC) relaxation of AC-OPF proposed by Jabr \cite{Jabr2006_SOC-OPF} is obtained from AC-OPF by introducing the additional variables
    \begin{align}
        \wmsoc_{i} &= \vm_{i}^{2}, 
            && \forall i \in \mathcal{N}\\
        \wrsoc_{ij} &= \vm_{i} \vm_{j} \cos (\theta_{j} - \theta_{i}), 
            && \forall ij \in \mathcal{E}\\
        \wisoc_{ij} &= \vm_{i} \vm_{j} \sin(\theta_{j} - \theta_{i}), 
            && \forall ij \in \mathcal{E}
    \end{align}
    and the non-convex constraint
    \begin{align}
        \label{eq:SOC:quad_prod}
        (\wrsoc_{ij})^{2} + (\wisoc_{ij})^{2} &= \wmsoc_{i} \wmsoc_{j}, && \forall ij \in \mathcal{E}.
    \end{align}
    The SOC relaxation is then obtained by relaxing Eq. \eqref{eq:SOC:quad_prod} into
    \begin{align}
        \label{eq:SOC:Jabr}
        (\wrsoc_{ij})^{2} + (\wisoc_{ij})^{2} &\leq \wmsoc_{i} \wmsoc_{j}, && \forall ij \in \mathcal{E}.
    \end{align}

    Model \ref{model:SOCOPF} presents the SOC-OPF formulation, in real variables.
    The objective function \eqref{eq:SOCOPF:objective} is equivalent to \eqref{eq:ACOPF:objective}.
    Constraints \eqref{eq:SOCOPF:kirchhoff_active} and \eqref{eq:SOCOPF:kirchhoff_reactive} enforce, at each node, Kirchhoff's current law for active and reactive power, respectively.
    Constraints \eqref{eq:SOCOPF:ohm:active:fr}--\eqref{eq:SOCOPF:ohm:active:to} capture Ohm's law on active and reactive, forward and reverse power flows.
    The $\gamma$ parameters that appear in the constraints are constant terms derived from \eqref{eq:ACOPF:ohm_fr}--\eqref{eq:ACOPF:ohm_to}, after substituting variables $\wmsoc, \wrsoc, \wisoc$.
    Constraints \eqref{eq:SOCOPF:thermal_limits} enforce thermal limits on forward and reverse power flows.
    Constraints \eqref{eq:SOCOPF:jabr} is Jabr's inequality \eqref{eq:SOC:Jabr}.
    Finally, constraints \eqref{eq:SOCOPF:voltage_bounds}--\eqref{eq:SOCOPF:reactive_dispatch_bounds}, like constraints \eqref{eq:ACOPF:voltage_bounds}--\eqref{eq:ACOPF:reactive_dispatch_bounds}, enforce minimum and maximum limits on nodal voltage magnitude, active and reactive generation.

    The SOC-OPF formulation is nonlinear and convex.
    This makes it more tractable to solve than AC-OPF using, e.g., polynomial-time interior-point algorithms.
    Furthermore, since it is a relaxation of AC-OPF, solving SOC-OPF provides valid dual bounds on the optimal value of AC-OPF.

\subsection{The DC-OPF Formulation}
\label{sec:OPF:DC}

    The DC-OPF formulation is a linear approximation of AC-OPF.
    The approximation assumes that all voltage magnitudes are one per-unit, voltage angles are small and losses are negligible, and it ignores reactive power \cite{Stott2009_DC,taylor2015convex}.
    The DC approximation underlies virtually all electricity markets and is widely used in, e.g., unit commitment and transmission network expansion planning problems.

    \begin{model}[!t]
        \caption{The DC-OPF Model}
        \label{model:DCOPF}
        \begin{subequations}
        \label{eq:DCOPF}
        \footnotesize
        \begin{align}
            \min \quad & \sum_{i \in \mathcal{N}} c_{i} \pg_{i} \label{eq:DCOPF:obj}\\
            \text{s.t.} \quad
                & \pg_{i} + \sum_{ji \in \mathcal{E}} \pf_{ji} - \sum_{ij \in \mathcal{E}} \pf_{ij}  = \pd_{i}
                    & \forall i \in \mathcal{N}
                   \label{eq:DCOPF:power_balance}\\
                & \pf_{ij} = b_{ij} (\va_{j} - \va_{i}) 
                   & \forall ij \in \mathcal{E}
                   \label{eq:DCOPF:ohm}\\
                & |\pf_{ij}| \leq \bar{s}_{ij}
                    & \forall ij \in \mathcal{E}
                    \label{eq:DCOPF:bounds:pf}\\
                & \pgmin_{i} \leq \pg_{i} \leq \pgmax_{i} 
                    & \forall i \in \mathcal{N}
                   \label{eq:DCOPF:bounds:pg}
        \end{align}
        \end{subequations}
    \end{model}
    
    Model \ref{model:DCOPF} presents the resulting linear programming (LP) formulation of DC-OPF.
    The objective \eqref{eq:DCOPF:obj} minimizes total generation costs.
    Constraints \eqref{eq:DCOPF:power_balance} enforce (active) power balance at each node.
    Constraints \eqref{eq:DCOPF:ohm} approximate Ohm's law using a phase-angle formulation.
    Note that, DC-OPF does not consider reverse power flows, unlike AC-OPF and SOC-OPF; this is because losses are neglected in DC-OPF.
    Constraints \eqref{eq:DCOPF:bounds:pf} enforce thermal constraints on each branch.
    Constraints \eqref{eq:DCOPF:bounds:pg} enforce minimum and maximum limits on active power generation.
    Finally, recall that constraints on phase angle differences and slack bus are omitted from the presentation for readability, but are implemented in all the numerical experiments.

\section{Methodology}
\label{sec:ICNN}

The goal of the paper is to train an ML model that takes the nodal
demand vector $\Sd$ as input, and outputs the optimal solution of its
corresponding OPF problem.  This section presents the input-convex
neural network (ICNN) architecture and its training, and establishes
its generalization guarantees.

\subsection{The Input-Convex Neural Network Architecture}
\label{sec:ICNN:architecture}

An ICNN is a special type of DNN that computes a convex function of its
input \cite{amos2017input}.  ICNNs are well-suited when one seeks to
represent or approximate convex functions since they are convex by
design, unlike general DNNs.  ICNNs achieve their convexity by
combining convex activation functions with convexity-preserving
operations \cite{amos2017input}.

The simplest ICNN architecture consists of fully-connected layers of
the form $h(\mathbf{x}) \, {=} \, \relu \left( W \mathbf{x} + d \right)$,
where $\mathbf{x} \, {\in} \, \mathbb{R}^{n}$ denotes the layer input
vector, $d \, {\in} \, \mathbb{R}^{m}$ is the bias vector, and $W \, {\in}
\, \mathbb{R}^{m \times n}$ is a weight matrix with non-negative coefficients, to ensure convexity.
The Rectified Linear Unit ($\relu$) activation function 
$\relu(x) \,{=}\, \max(0, x)$ is applied element-wise.
Figure \ref{fig:ICNN:example} illustrates the difference between DNNs and ICNNs on a small example, and showcases ICNNs' convexity.

\begin{figure}[!t]
    \centering
    \subfloat[Example DNN (left) and its Output (right). The DNN defines a non-convex function.]{
        \centering
        \includegraphics[width=0.45\columnwidth]{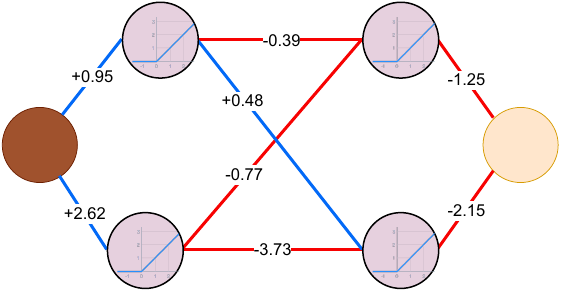}
        \hspace{1em} 
        \includegraphics[width=0.45\columnwidth]{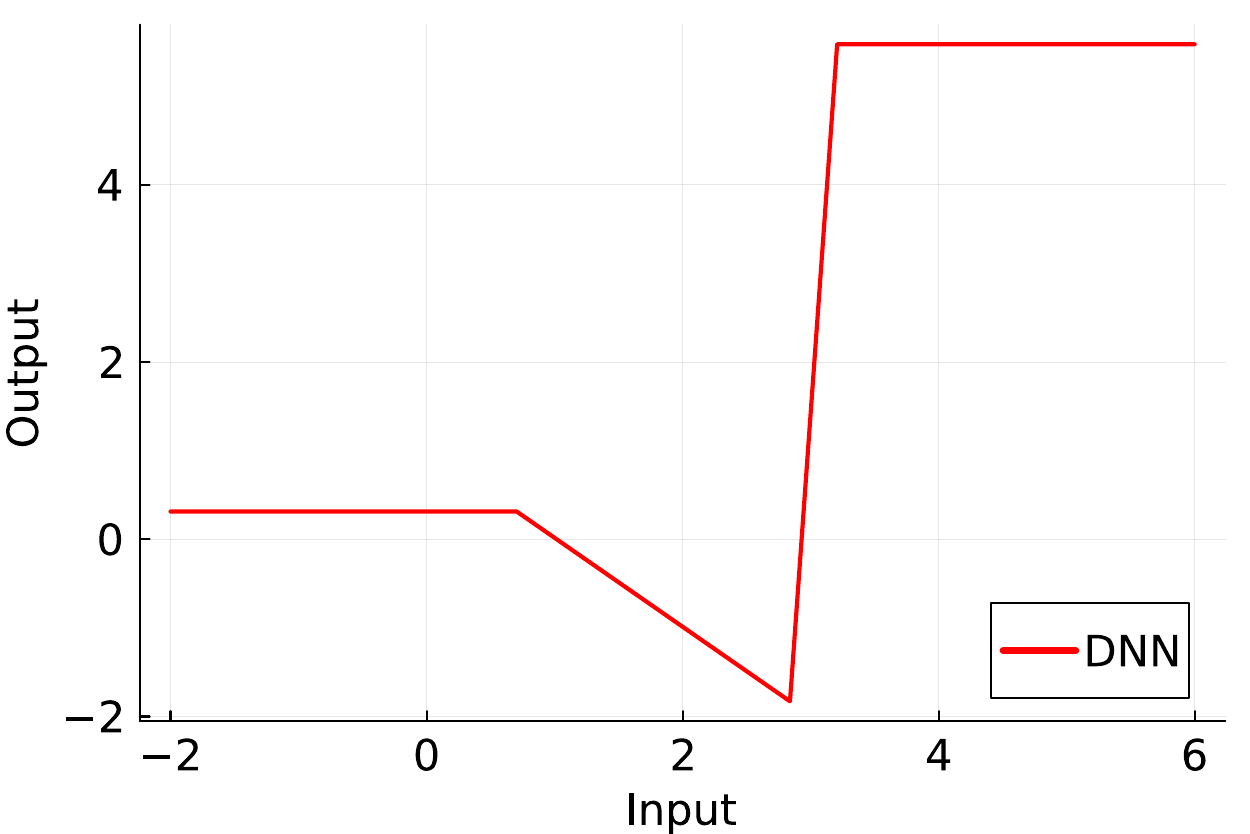}
    }\\
    \subfloat[Example ICNN (left) and its Output (right). All ICNN weights are positive and it defines a convex function.]{
        \centering
        \includegraphics[width=0.45\columnwidth]{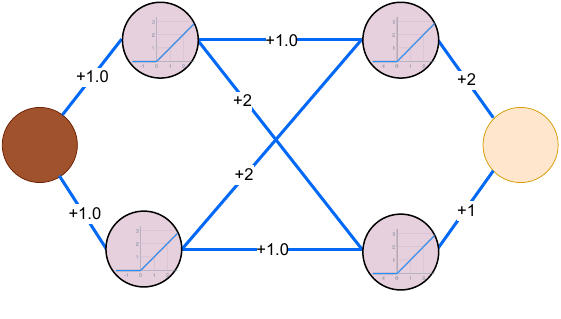}
        \hspace{1em}
        \includegraphics[width=0.45\columnwidth]{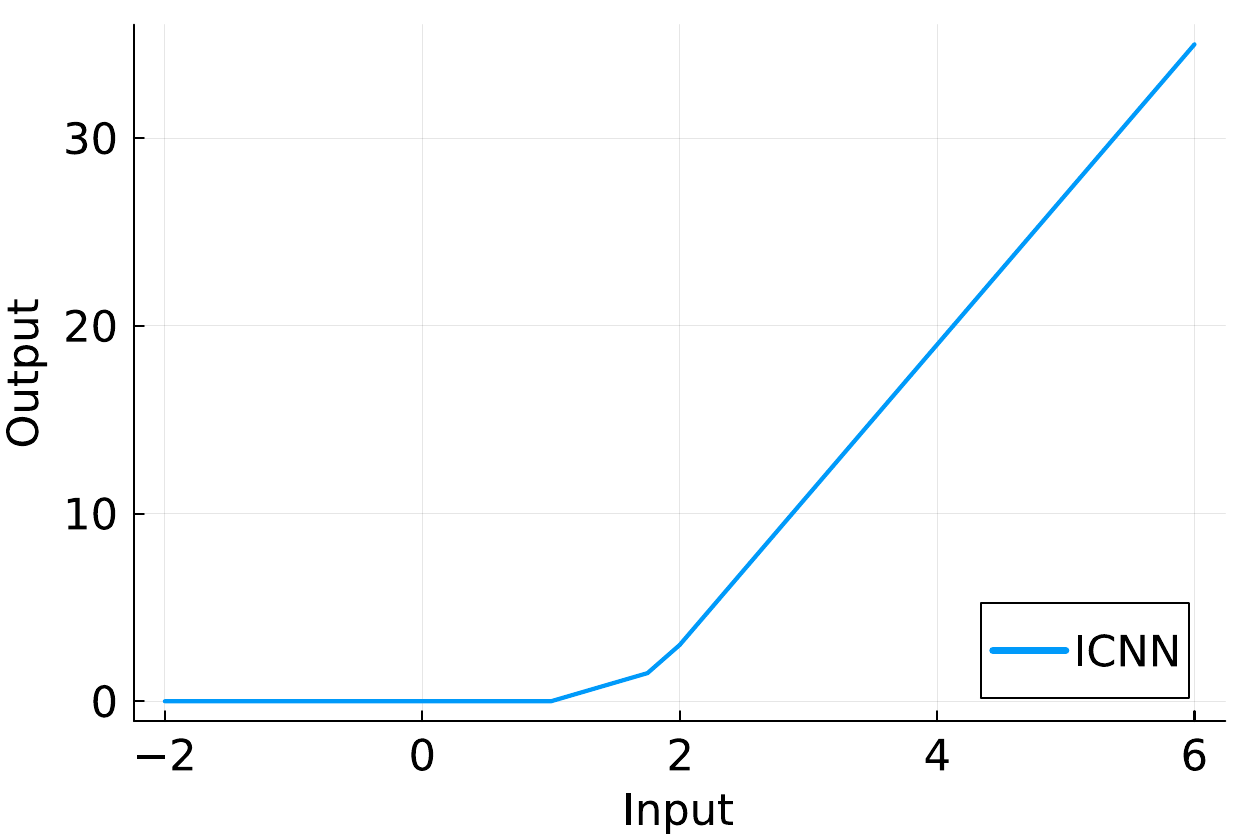}
    }\\
    \caption{Illustration of Input-Convex Neural Networks.}
    \label{fig:ICNN:example}
\end{figure}

To learn the value function of OPF problems, this paper considers ICNN
architectures with skip-connections, which allow the approximation of convex functions with
decreasing slopes, and have been shown to improve performance
\cite{amos2017input,Zhang2022_ConvexNNSolverDCOPF}. The overall
architecture is illustrated in Figure \ref{fig:ICNN:full}. Its
$k$-th layer is of the form
\begin{align}
    \mathbf{x}^{k} = h^{k}(\mathbf{x}^{k-1}) = \relu(W^{k} \mathbf{x}^{k-1} + H^{k} \mathbf{x}^{0} + d^{k}),
\end{align}
where $\mathbf{x}^{k}$ and $\mathbf{x}^{k-1}$ denote the outputs of
layer $k$ and $k{-}1$, $\mathbf{x}^{0}$ denotes the input of the ICNN,
i.e., $\mathbf{x}^{0} \, {=} \, (\pd, \qd)$, $d^{k}$ is the bias vector, and
$W^{k}, H^{k}$ are weight matrices.  Skip-connections feed the ICNN
input $\mathbf{x}^{0}$ to each layer.  The coefficients of $W^{k}$ are
non-negative, whereas $H^{k}$ may take positive or negative values
without affecting convexity \cite{amos2017input}.

\begin{figure}[!t]
    \centering
    \includegraphics[width=0.8\columnwidth]{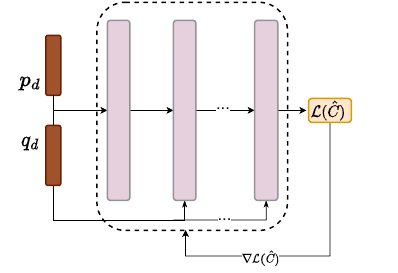}
    \caption{Fully Connected ICNN}
    \label{fig:ICNN:full}
\end{figure}

\subsection{The Training of ICNNs}
\label{sec:ICNN:training}

Because some of their weights must be non-negative, ICNNs cannot
directly be trained with traditional gradient descent algorithms.  A
common choice is to use projected gradient descent, which clips
gradients before updating the weights.  While it converges under the
same assumptions as traditional gradient descent algorithms, projected
gradient descent requires special attention to avoid slow convergence.

Preliminary experiments showed that, for larger learning rates,
gradient clipping results in a dramatic slow-down, when compared to
the training of regular DNNs. This comes from the greater impact on
the loss that happens when projecting back onto the feasible
region. In fact, appropriate learning rates (e.g.,
\cite{tripuraneni2018averaging}) also depend on the relative distance
to the constraint barriers - which can be very small when approaching
the local optimum the algorithm is converging to. Unfortunately,
dynamically adapting learning rates is not an easy task when
post-clipping gradients from a standard gradient descent
algorithms. Scheduling parameters had to be tuned in hyper-parameter
optimization over the validation set.

\subsection{Generalization Guarantees}
\label{sec:ICNN:generalization}

\newcommand{\flo}{\check{f}}
\newcommand{\fup}{\hat{f}}

One fundamental benefit of using ICNNs is their stronger
generalization guarantees.  To the best of the authors' knowledge, the
strongest theoretical guarantee for ICNNs in the context of OPF
proxies is from \cite{Zhang2022_ConvexNNSolverDCOPF} (Theorem 6.5):
they show that the gradient of a perfectly-trained ICNN is bounded
over the convex hull of the training set.  While their proof uses
convexity arguments, the result is not specific to ICNNs. For
instance, it also holds for fully-connected DNNs with ReLU activation
(DNN-ReLU for short). Indeed, DNN-ReLUs represent continuous,
piece-wise linear functions, whose gradient is piece-wise constant.
Since DNN-ReLUs have a finite number of pieces, their gradients are
immediately bounded.

This paper improves on the generalization guarantees of
\cite{Zhang2022_ConvexNNSolverDCOPF}: it provides generalization
bounds for {\em general ICNNs} (Theorem
\ref{thm:ICNN:generalization:general}), and {\em explicit formulae}
for {\em perfectly-trained ICNNs} (Theorem
\ref{thm:ICNN:generalization:perfect}). The results are presented for
a parametric convex optimization problem $\Phi(b)$ that the ICNN $f$
must learn, and its parametric strong dual $\Psi(b)$. For instance,
for the case of a parametric linear program,
\begin{align}
\Phi(b) &= \ \min_{x} \left\{ c^{\top} x \ \middle| \ A x  \geq b \right\},\\
\Psi(b) &= \ \max_{y} \left\{ b^{\top}y \ \middle| \ A^{\top} y  = c, y \geq 0 \right\},
\end{align}
with primal and dual variables $x$ and $y$ respectively.
When strong duality holds, $\Phi(b) = \Psi(b)$.
$\Phi$ is
convex in $b$ and for a given $b$, any dual optimal solution is a
sub-gradient of $\Phi$ \cite{nocedal1999numerical}.  In addition, if
the dual optimum $y^{*}(b)$ is unique, then $\nabla \Phi$ is
defined and $\nabla \Phi(b) = y^{*}(b)$.  

The results are expressed in terms of a dataset $\mathcal{D} \, {=} \,
\{(b_{i}, y_{i}, z_{i})\}_{i=1, ..., N}$ where each $b_{i}, y_{i},
z_{i}$ are the right-hand side, optimal dual solution, and optimal
value of instance $i \,{=}\,1, ..., N$, respectively. They are also
expressed in terms of $\mathcal{B} \,{=}\, \text{conv}\{b_{1}, ...,
b_{N}\}$, the convex hull of the right-hand sides with diameter
$\diam(\mathcal{B})$. The first result is a generalization bound for
arbitrary ICNNs.

\begin{theorem}
\label{thm:ICNN:generalization:general}
Let $f$ be an ICNN, $\tilde{z}_{i} \, {=} \, f(b_{i})$, and $\tilde{y}_{i} \, {=} \, \nabla f(b_{i})$.
There exists a constant $M$, whose value depends only on $\mathcal{D}$ and $\{\tilde{y}_{i}, \tilde{z}_{i}\}$, such that
\begin{align}
\forall b \in \mathcal{B}, |f(b) - \Phi(b)| \leq M.
\end{align}
\end{theorem}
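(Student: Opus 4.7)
The plan is to sandwich both $f$ and $\Phi$ between piecewise-linear lower and upper envelopes constructed entirely from the available data $\mathcal{D}$ and from the training-point evaluations $\{(\tilde{y}_i,\tilde{z}_i)\}$, and then to bound the gap between these envelopes uniformly on the compact set $\mathcal{B}$.

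First I would build the envelopes for $\Phi$. Since $\Phi$ is convex, $\Phi(b_i)=z_i$, and each $y_i$ is a subgradient of $\Phi$ at $b_i$ (as recalled just before the theorem statement), the subgradient inequality yields a lower envelope
\begin{equation*}
\underline{\Phi}(b) \ := \ \max_{i=1,\dots,N}\bigl\{ z_i + y_i^{\top}(b - b_i) \bigr\} \ \leq \ \Phi(b).
\end{equation*}
Convexity of $\Phi$ together with $\Phi(b_i)=z_i$ provides a matching upper envelope via Jensen's inequality: for any $b \in \mathcal{B}$, any $\lambda$ in the simplex with $\sum_i \lambda_i b_i = b$ (feasible by the definition of $\mathcal{B}$) satisfies $\Phi(b) \leq \sum_i \lambda_i z_i$, so minimizing over such $\lambda$ defines a piecewise-linear upper bound $\overline{\Phi}(b)$ whose values depend only on $\{(b_i,z_i)\}$.

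Second, I would repeat this construction verbatim for the ICNN $f$, using convexity of $f$, $f(b_i)=\tilde{z}_i$, and $\tilde{y}_i = \nabla f(b_i) \in \partial f(b_i)$. This yields envelopes $\underline{f}(b) \leq f(b) \leq \overline{f}(b)$ that depend only on $\{(b_i,\tilde{z}_i,\tilde{y}_i)\}$. Combining the four bounds termwise gives
\begin{equation*}
|f(b) - \Phi(b)| \ \leq \ \max\bigl\{\overline{f}(b) - \underline{\Phi}(b),\ \overline{\Phi}(b) - \underline{f}(b)\bigr\}.
\end{equation*}
Setting
\begin{equation*}
M \ := \ \sup_{b \in \mathcal{B}} \max\bigl\{\overline{f}(b) - \underline{\Phi}(b),\ \overline{\Phi}(b) - \underline{f}(b)\bigr\}
\end{equation*}
finishes the argument: $\mathcal{B}$ is compact (convex hull of finitely many points), all four envelopes are continuous and piecewise linear on $\mathcal{B}$, so the supremum is attained and finite; by construction $M$ depends only on $\mathcal{D}$ and $\{(\tilde{y}_i,\tilde{z}_i)\}$, exactly as required.

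The main obstacle is the upper-envelope step. The lower envelopes are immediate maxima of finitely many affine functions, but $\overline{\Phi}$ and $\overline{f}$ are defined implicitly as the optimal values of a parametric LP in $\lambda$. I would verify feasibility (automatic from $b \in \mathrm{conv}\{b_i\}$), continuity and piecewise linearity in $b$ (standard LP sensitivity), and that the resulting function indeed upper bounds its convex counterpart on all of $\mathcal{B}$; none of these are deep, but they are the only places where the proof is not a one-liner. A secondary subtlety worth flagging is that $\nabla f(b_i)$ may lie at a kink of the ReLU ICNN; interpreting $\tilde{y}_i$ as any element of $\partial f(b_i)$ preserves the subgradient inequality and leaves the rest of the argument intact.
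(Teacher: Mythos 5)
Your proposal is correct and follows essentially the same route as the paper: sandwich the functions between a piecewise-linear lower envelope (maximum of tangent planes at the $b_i$) and an upper envelope (the convex-combination LP over $\mathcal{B}$), then bound the gap uniformly on the compact hull. If anything, your version is slightly more complete, since you explicitly construct the envelopes for $\Phi$ from $\{(b_i, y_i, z_i)\}$ as well as for $f$, a step the paper compresses into the phrase ``by convexity of $\Phi$'' even though it is needed to make $M$ depend only on $\mathcal{D}$ and $\{\tilde{y}_i,\tilde{z}_i\}$ rather than on $\Phi$ itself.
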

\begin{proof}
        Define $\flo$, $\fup$ over $\mathcal{B}$ as
        \begin{align*}
            % \label{eq:ICNN:lower_envelope}
            \flo(b) &= \max \left\{ \tilde{z}_{i} + \tilde{y}_{i}^{\top}(b - b_{i}) \ \middle| \ i=1, ..., N \right\}\\
            % \label{eq:ICNN:upper_envelope}
            \fup(b) &= \min_{\lambda} \left\{ \sum_{i} \lambda_{i} \tilde{z}_{i} \ \middle| \ \lambda \geq 0, e^{\top} \lambda = 1 \right\} 
        \end{align*}
        Note that $\flo, \fup$ are convex lower and upper envelopes of $f$.
        The constant $M$ is obtained by solving
        \begin{subequations}
        \label{eq:ICNN:generalization_bound:optimization}
        \begin{align}
            M = \max_{b, z} \quad & |z - \Phi(b)|\\ s.t. \quad &
            \flo(b) \leq z \leq \fup(b),\\ & b \in \mathcal{B},
        \end{align}
        \end{subequations}
        which concludes the proof by convexity of $\Phi$.
    \end{proof}

\noindent
Theorem \ref{thm:ICNN:generalization:general} provides a worst-case
guarantee on the generalization performance of the ICNN, which only
depends on the value of $f$ on the dataset $\mathcal{D}$. The next
theorem \ref{thm:ICNN:generalization:perfect} provides an explicit
worst-case guarantee when the ICNN {\em perfectly fits} the training
data.

\begin{theorem}
\label{thm:ICNN:generalization:perfect}
Let $f$ be an ICNN for learning $\Phi$ and assume that, for all $i \in \{1, ..., N\}$,
\begin{align*}
f(b_{i}) = z_{i} = \Phi(b_i) \text{ and } \nabla_{b} f(b_{i}) = y_{i} = \Phi(b_i).
\end{align*}
Then, $\forall b \in \mathcal{B}, |f(b) - \Phi(b) | \leq \max_{i} \|y_{i}\| \times \diam(\mathcal{B})$.
\end{theorem}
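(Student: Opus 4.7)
My plan is to leverage the perfect-fit hypothesis to place $f(b)$ and $\Phi(b)$ inside the same two-sided envelope. Since both functions are convex and agree in both value and gradient at every $b_i$, each index $i$ yields a tangent inequality $g(b) \geq z_i + y_i^\top(b - b_i)$ valid for $g \in \{f, \Phi\}$, and Jensen's inequality yields $g\bigl(\sum_i \lambda_i b_i\bigr) \leq \sum_i \lambda_i z_i$ on any convex combination, again for both choices of $g$. These matched envelopes should suffice to bound $|f(b) - \Phi(b)|$.

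Concretely, for a fixed $b \in \mathcal{B}$, I would write $b = \sum_i \lambda_i b_i$ with $\lambda \geq 0$ and $e^{\top}\lambda = 1$ (possible since $\mathcal{B}$ is the convex hull of the training inputs). Applying Jensen to $f$ for the upper bound and the $\lambda$-weighted average of the tangent inequalities to $\Phi$ for the lower bound, the $\sum_i \lambda_i z_i$ terms cancel in the difference, leaving
\[
f(b) - \Phi(b) \leq \sum_i \lambda_i\, y_i^\top (b_i - b).
\]
The symmetric pairing (Jensen applied to $\Phi$ and the tangent average applied to $f$) yields the same right-hand side as an upper bound for $\Phi(b) - f(b)$, so the inequality holds in absolute value.

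The concluding step is Cauchy--Schwarz combined with the diameter bound: $\sum_i \lambda_i y_i^\top (b_i - b) \leq \max_i \|y_i\| \cdot \sum_i \lambda_i \|b_i - b\| \leq \max_i \|y_i\| \cdot \diam(\mathcal{B})$, using $\|b_i - b\| \leq \diam(\mathcal{B})$ (since $b, b_i \in \mathcal{B}$) and $\sum_i \lambda_i = 1$.

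The main subtlety I anticipate is the choice of the lower envelope. Theorem \ref{thm:ICNN:generalization:general} uses the pointwise-tighter $\check{f}(b) = \max_i \{z_i + y_i^\top (b - b_i)\}$, but that maximum does not share convex-combination weights with the upper envelope $\sum_i \lambda_i z_i$, so bounding $\hat{f}(b) - \check{f}(b)$ through it only produces a factor $2\max_i \|y_i\| \cdot \diam(\mathcal{B})$. Replacing the max by the $\lambda$-weighted average of the same tangents---still a valid lower bound, since the max dominates any convex combination of its arguments---is what permits the clean $z_i$ cancellation and delivers the sharp constant claimed in the theorem.
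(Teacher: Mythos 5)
Your proposal is correct and follows essentially the same route as the paper's proof: write $b=\sum_i\lambda_i b_i$, sandwich both $f(b)$ and $\Phi(b)$ between the $\lambda$-weighted tangent lower bound and the Jensen upper bound $\sum_i\lambda_i z_i$, cancel the $z_i$ terms, and finish with Cauchy--Schwarz and the diameter of $\mathcal{B}$. Your remark about averaging the tangents with the weights $\lambda$ (rather than taking their pointwise maximum) is exactly the step the paper uses to obtain the sharp constant.
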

\begin{proof}
        Let $b \in \mathcal{B}$, i.e., $b = \sum_{i} \lambda_{i} b_{i}$ where weights $\lambda_{i}$ are non-negative and sum to $1$.
        By convexity of $f$ and $\Phi$, 
        \begin{align*}
            \forall i, \quad z_{i} + (b - b_{i})^{\top} y_{i} \leq \Phi(b), f(b) \leq \sum_{j} \lambda_{j} z_{j}.
        \end{align*}
        Combining the above inequalities with weights $\lambda_{i}$ yields
        \begin{align*}
            |\Phi(b) - f(b) |
                & \leq \sum_{j} \lambda_{j} z_{j} - \sum_{i} \lambda_{i} [z_{i} - (b - b_{i})^{\top}y_{i} ]\\
                & \leq \sum_{i} \lambda_{i} (b - b_{i})^{\top} y_{i}\\
                & \leq \max_{i} (b - b_{i})^{\top} y_{i}\\
                & \leq \max_{i} \|y_{i} \| \diam(\mathcal{B})
        \end{align*}
        which concludes the proof.
    \end{proof}

\noindent
Observe that Theorem \ref{thm:ICNN:generalization:perfect} applies to
any subset of data points, which offers tighter guarantees over
smaller domains.

\section{Numerical Experiments}
\label{sec:results}

\newcommand{\ieee}{\texttt{ieee300}}
\newcommand{\pegaseS}{\texttt{pegase1k}}
\newcommand{\pegaseM}{\texttt{pegase2k}}
\newcommand{\rte}{\texttt{rte6k}}

The section reports numerical results for ICNNs that are trained to
learn the value functions of DC-OPF, SOC-OPF, and AC-OPF.  While
DC-OPF and SOC-OPF are convex with convex value functions, AC-OPF is
not convex and its value function is not guaranteed to be convex. In
this last case, the task is thus to approximate (possibly) the OPF
value function with a convex function.

\subsection{Experimental Setting}
\label{sec:results:setup}

The proposed approach is evaluated on several test cases from PGLib
\cite{babaeinejadsarookolaee2019power} with up to 6468 buses.  These
test cases are 50 times larger in size compared to those in
\cite{Zhang2022_ConvexNNSolverDCOPF}. Moreover, these prior results
only considered DC-OPF. 

Table \ref{table: cases} reports, for each system, the number of buses
$|\mathcal{N}|$, branches $|\mathcal{E}|$ and generators
$|\mathcal{G}|$, as well as the nominal total demand
(${\text{P}}^{d}_{\text{ref}}$) and its range across the dataset
($[\underline{\text{P}}^{d}, \bar{\text{P}}^{d}]$).

    \begin{table}[!t]
        \centering
        \caption{Statistics of the PGLib test cases.}
        \label{table: cases}
        \begin{tabular}{lrrrrr}
            \toprule
            \multicolumn{1}{c}{System}
            & \multicolumn{1}{c}{\textbf{$|\mathcal{N}|$}}  
            & \multicolumn{1}{c}{\textbf{$|\mathcal{E}|$}}  
            % & \multicolumn{1}{c}{\textbf{$|\mathcal{L}|$}}  
            & \multicolumn{1}{c}{\textbf{$|\mathcal{G}|$}}
            & \multicolumn{1}{c}{${\text{P}}^{d}_{\text{ref}}$}
            & \multicolumn{1}{c}{$[\underline{\text{P}}^{d}, \bar{\text{P}}^{d}]$}
            \\
            \midrule
            \ieee
                & 300  
                & 411  
                & 69   
                & 263
                & [\phantom{0}210, \phantom{0}280]
            \\
            \pegaseS
                & 1354  
                & 1991  
                & 260     
                & 781
                & [\phantom{0}625, \phantom{0}820] 
            \\
            \pegaseM 
                & 2869  
                & 4582  
                & 510      
                & 1522
                & [1218, 1599]
            \\
            \rte 
                & 6468  
                & 9000  
                & 399      
                & 1109
                & [\phantom{0}887, 1164]
            \\
            \bottomrule
        \end{tabular}
    \end{table}
    
For each system, a dataset of OPF instances is generated by perturbing the load vectors as follows
\begin{align*}
\pd = \alpha \times \eta \times \pd_{\text{ref}}, \quad \qd = \alpha \times \eta \times \qd_{\text{ref}},
\end{align*}
where $\alpha$ is the system-wide scaling factor, sampled from a
uniform distribution, $\eta \in \mathbb{R}^{|\mathcal{N}|}$ is the
bus-level uncorrelated noise vector, sampled from a log-normal
distribution with mean $1$ and standard deviation $5\%$, and
$\pd_{\text{ref}}, \qd_{\text{ref}}$ are the nominal active and
reactive load vectors.  The range of $\alpha$ is selected to avoid
regions where AC-OPF becomes infeasible.

For each system, $50,000$ OPF instances are generated and solved.  The
OPF problems are formulated with \textit{PowerModels.jl} and solved with Mosek (DC
and SOC) or Ipopt (AC).  Infeasible instances are excluded, and the
remaining dataset is partitioned into training (40\%), validation
(30\%), and testing (30\%) sets for appropriate training and model
assessment procedures.  For each system and OPF formulation, the paper
trains a DNN and an ICNN to predict the value function of the
considered OPF. Both models use the very same architecture, the only
difference being that DNN weights are unrestricted.  All ML models are
implemented in Julia using \textit{Flux.jl} \cite{Flux.jl-2018}.
Experiments are carried out on Intel(R) Xeon(R) Gold 6226 CPU @
2.70GHz machines with NVIDIA Tesla A100 GPUs on the Phoenix cluster
\cite{PACE}.

\subsection{ICNN Performance}
\label{sec:results:performance}

The paper uses relative absolute optimality gap to measure the
performance of the DNN and ICNN models. Let $z^{*}$ and $\tilde{z}$
denote the ground truth (obtained by the optimization solver) and the
predicted optimal value by a ML model, respectively.  The relative
absolute optimality gap, henceforth referred to as {\em optimality
  gap} for simplicity, is defined as
    \begin{align}
        \label{eq:results:relative_absolute_gap}
        \text{gap} &= \frac{|\tilde{z} - z^{*}|}{|z^{*}|}.
    \end{align}
Note that the prediction $\tilde{z}$ may {\em over-estimate or
  under-estimate} the ground truth $z^{*}$, which is why the absolute
value is needed.  Unless specified otherwise, reported averages use
the geometric mean $\mu(x_{1}, ..., x_{n}) = \sqrt[n]{x_{1} \times
  ... \times x_{n}}$.

\begin{table}[!t]
    \centering
    \caption{ICNN Performance Results.}
    \label{table: results_icnn}
    \begin{tabular}{lrrrrr} 
        \toprule
            &
            & \multicolumn{2}{c}{Mean gap (\%)}
            & \multicolumn{2}{c}{Worst gap (\%)}\\
        \cmidrule(lr){3-4} \cmidrule(lr){5-6}
        System 
            & OPF 
            & \multicolumn{1}{c}{ICNN}      
            & \multicolumn{1}{c}{DNN}       
            & \multicolumn{1}{c}{ICNN}      
            & \multicolumn{1}{c}{DNN}       
        \\
        \midrule
        \ieee
            & DC  
                & 0.15    
                & 0.19   
                & 1.58    
                & 1.90   
            \\
            & SOC 
                & 0.31    
                & 0.37   
                & 5.77    
                & 4.96   
            \\
            & AC  
                & 0.39    
                & 0.39   
                & 15.81    
                & 14.56   
            \\
        \midrule
        \pegaseS
            & DC  
                & 0.28    
                & 0.33  
                & 2.35    
                & 1.83  
            \\
            & SOC 
                & 0.33  
                & 0.82   
                & 2.22  
                & 2.15   
            \\
            & AC  
                & 0.33    
                & 0.68   
                & 2.37    
                & 1.95   
            \\
        \midrule
        \pegaseM
            & DC  
                & 0.22 
                & 0.30 
                & 3.45  
                & 3.21  
            \\
            & SOC 
                & 1.03
                & 0.32 
                & 2.15 
                & 2.36  
            \\
            & AC  
                & 0.24 
                & 0.27 
                & 3.02 
                & 8.89  
            \\
        \midrule
        \rte
            & DC  
                & 0.27 
                & 0.38 
                & 1.76  
                & 1.15  
            \\
            & SOC 
                & 0.29 
                & 0.57 
                & 1.69 
                & 5.52  
            \\
            & AC  
                & 0.25 
                & 0.33 
                & 2.71  
                & 3.08  
            \\
        \bottomrule
    \end{tabular}
    \end{table}

Table \ref{table: results_icnn} reports, for each system and model
architecture (DNN or ICNN), the geometric mean optimality gap, and the
worst-case optimality gap across the testing set. First, observe that,
except for SOC on the \pegaseM{} system (for which ICNN training did
not converge), {\em ICNN always yields a mean optimality gap below
  0.5\%}, and DNN yields mean gaps below 1\%.  Overall, predictions
for DC-OPF tend to be slightly more accurate than those for SOC- and
AC-OPF, which may be explained by the fact that DC-OPF is a linear
programming problem, whereas SOC- and AC-OPF are nonlinear.  Second,
{\em ICNN almost always outperforms DNN in terms of mean optimality
  gap.} This holds even for AC-OPF, whose value function is
non-convex.  {\em These results suggest that ICNN models are accurate
  enough to be used instead of general, non-convex DNN models to
  accurately represent the value functions of OPF problems.}

It is also important to study the worst-case optimality gaps,
especially if ML models are to be embedded in larger optimization
problems.  Interestingly, the worst-case gaps are highest for the
\ieee{} system, with both ICNN and DNN exhibiting worst-case
optimality gaps of about 15\%, i.e., roughly 40 times larger than
their mean gap.  This demonstrates that low mean optimality gaps are
not a guarantee of uniformly good performance.  For the larger
systems, both architectures exhibit lower worst-case gaps, ranging
around 2--5\%, or roughly 20 times larger than the mean optimality
gap.  Interestingly, ICNN tends to produce better worst-case optimality
gaps than DNN for SOC and AC-OPF on the two larger test cases.
Designing training procedures that reduce worst-case performance is an
active area of research.
  
Figure \ref{fig:results:ICNN_vs_DNN_errors} provides a more granular
picture of the DNN and ICNN performance on the \rte{} system.  The
figure reports, for each OPF formulation, the distribution of relative
optimality gaps over the dataset (left panel) and as a function of
total load (right panel).  Note that these results are {\em relative
  optimality gaps} $(\tilde{z} - z^{*}) / |z^{*}|$: a positive
(resp. negative) gap indicates that the ML model over-estimates
(resp. under-estimates) the ground truth value.  Overall, echoing the
results of Table \ref{table: results_icnn}, {\em the ICNN gaps
  exhibit lower variance than the DNN gaps, especially for SOC-OPF.}
Furthermore, both formulations exhibit similar behavior with respect
to the system total load, with overall higher gaps towards the ends of
the range. For SOC-OPF and AC-OPF, the plots on the right panels
highlight the benefits of ICNNs. 
    
    \begin{figure}[!t]
        \centering
        \subfloat[Comparison of errors on DC-OPF]{
            \includegraphics[width=0.48\columnwidth]{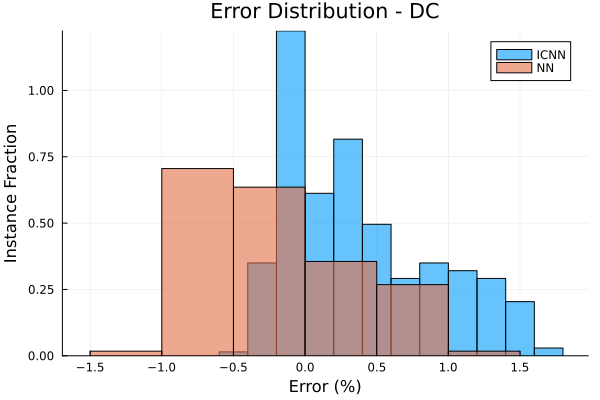}
            \hfill
            \includegraphics[width=0.48\columnwidth]{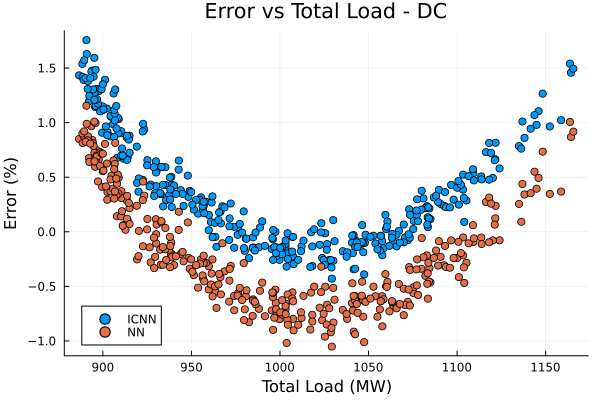}
        }\\
        \subfloat[Comparison of errors on SOC-OPF]{
            \includegraphics[width=0.48\columnwidth]{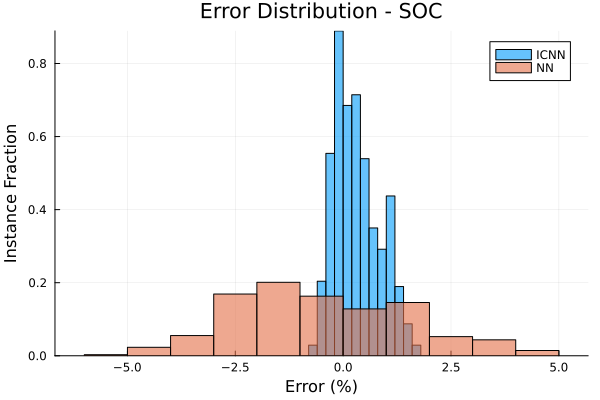}
            \hfill
            \includegraphics[width=0.48\columnwidth]{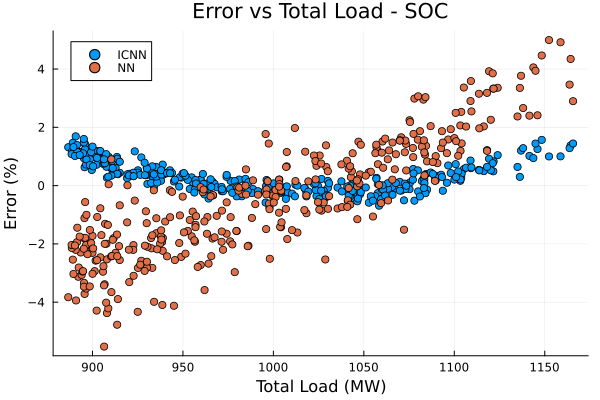}
        }\\
        \subfloat[Comparison of errors on AC-OPF]{
            \includegraphics[width=0.48\columnwidth]{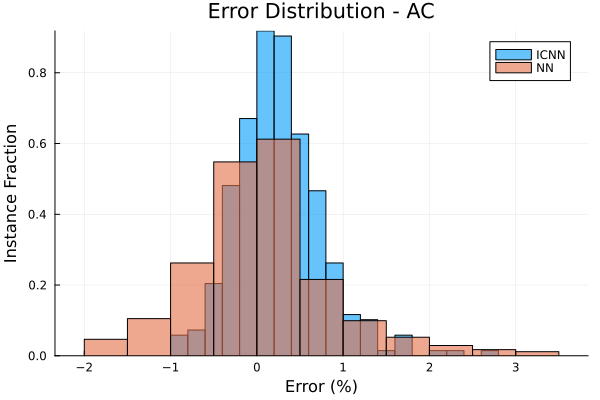}
            \hfill
            \includegraphics[width=0.48\columnwidth]{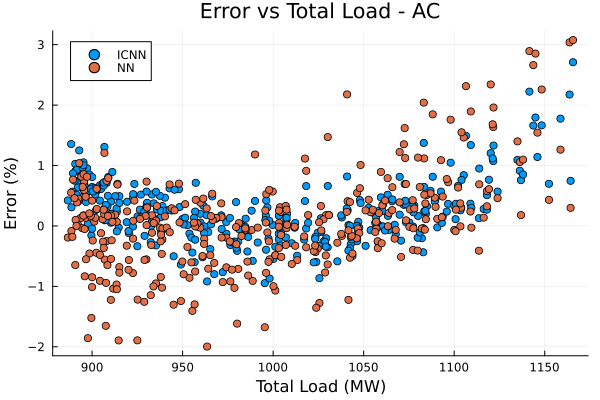}
        }
        \caption{Comparison of relative gap $(\tilde{z} - z^{*})/|z^{*}|$ on \rte. Positive (resp. negative) values mean the prediction over- (resp. under-) estimates the ground truth optimal value.}
        \label{fig:results:ICNN_vs_DNN_errors}
    \end{figure}

\section{Conclusion}
\label{section: conclusion}

This paper studies whether ICNNs can be accurate enough to approximate
the value function of large-scale OPF problems. ICNNs are an
attractive avenue for approximating OPF problems, since their
convexity enables them to be embedded in optimization models without
resorting to a MIP reformulation. The results in this paper indicate
that ICNNs can consistently approximate system operating costs under
different grid conditions, often with optimization gaps below $0.5$.
This suggests that convexity constraints do not entail a substantial
sacrifice in accuracy, even in inherently non-convex scenarios like
AC-OPF.  Notably, in most cases, ICNNs exhibit slightly lower
geometric errors than DNNs with similar architectures, reinforcing
the potential of this approach.

It is important to acknowledge challenges and opportunities in ICNN
research. As highlighted by the SOC-OPF formulation of the
$\texttt{2869\_pegase}$ system, ICNN training did not converge
smoothly in some instances, indicating that fine tuning the
optimization parameters may be needed to render more efficient
training. Additionally, convexity introduced a slight optimistic bias
increasing the error in extreme cases.  Although it may not result in
large impacts, and might even be desirable depending on the
application, it is important to mitigate its potential negative
effects.  Penalizing differently for under and over estimations might
allow the training to be tailored for different applications.
    
In summary, this paper identified a promising pathway for addressing
complex and time-sensitive OPF couplings in power systems.  The
ability to achieve near-optimal cost estimations with reduced
computational burden through machine learning methods has significant
implications for the efficient operation and management of electrical
grids. As research on ICNNs for OPF progresses, it is likely that they
will play an increasingly vital role in ensuring the reliability and
sustainability of power systems in the future.

\bibliographystyle{IEEEtran}
\bibliography{blib}

\end{document}